\theoremstyle{plain}
\newtheorem{lem}{Lemma}
\theoremstyle{definition}
\theoremstyle{remark}
\newcommand\numberthis{\addtocounter{equation}{1}\tag{\theequation}}
\def\*#1{\boldsymbol{#1}}
\title{Scalable Probabilistic Matrix Factorization with Graph-Based Priors}
\author[1]{Jonathan Strahl}
\author[2]{Jaakko Peltonen}
\author[1,3]{Hiroshi Mamitsuka}
\author[1]{Samuel Kaski}\affil[1]{Helsinki Institute for Information Technology HIIT,
\protect\\ Department of Computer Science, Aalto University}
\affil[2]{Faculty of Information Technology and Communication Sciences, Tampere University, Finland}
\affil[3]{Bioinformatics Center, Institute for Chemical Research, Kyoto University, Japan}
\begin{document}

\maketitle


\begin{abstract}
In matrix factorization, available graph side-information may not be well suited for the matrix completion problem, having edges that disagree with the latent-feature relations  learnt from the incomplete data matrix. We show that removing these \textit{contested} edges improves prediction accuracy and scalability. We identify the contested edges through a highly-efficient graphical lasso approximation. The identification and removal of contested edges adds no computational complexity to state-of-the-art graph-regularized matrix factorization, remaining linear with respect to the number of non-zeros. Computational load even decreases proportional to the number of edges removed. Formulating a probabilistic generative model and using expectation maximization to extend graph-regularised alternating least squares (GRALS) guarantees convergence. Rich simulated experiments illustrate the desired properties of the resulting algorithm. On real data experiments we demonstrate improved prediction accuracy with fewer graph edges (empirical evidence that graph side-information is often inaccurate). A 300 thousand dimensional graph with three million edges (Yahoo music side-information) can be analyzed in under ten minutes on a standard laptop computer demonstrating the efficiency of our graph update.
\end{abstract}

\section{Introduction}\label{sec:intro}

Matrix factorization (MF) is popular in a number of domains including recommender systems \cite{koren2009matrix,mehta2017review}, bioinformatics \cite{brunet2004metagenes,jacoby2018future,stein2018enter,zakeri2018gene,zheng2013collaborative}, image restoration \cite{xue2017depth} and many more \cite{davenport2016overview}. Much of the data is of a very large scale and sparse, and additional (side-)information is usually available. Therefore, many methods focus on scalability \cite{davenport2016overview,mnih2008probabilistic,sardianos2019optimizing} and the addition of side information (SI) \cite{chiang2015matrix,Chiang2018NoisySI,gonen2013kernelized,ma2011recommender,zakeri2018gene,zhou2012kernelized,zhao2015expert}, and more recently scalable methods with SI \cite{monti2017geometric,rao2015collaborative,yao2018convolutional}.

Empirical evidence shows that prediction accuracy is significantly improved by graph SI, where edges in the graph represent similarity between connected nodes \cite{cai2011graph,ma2011recommender,monti2017geometric,rao2015collaborative,yao2018convolutional,zhou2012kernelized,zhao2015expert}. MF (or low-rank matrix completion) has theoretical guarantees for exact completion without and with noise \cite{candes2010matrix,candes2009exact}. 
Introducting noisy SI is shown to reduce sample-complexity, and is reduced even further handling the noise \cite{chiang2015matrix}. Reduction in sample complexity through the introduction of graph SI has also been shown \cite{ahn2018binary,rao2015collaborative}, as a function of graph quality.  However, to the best of our knowledge there is no work on scalable methods to handle the noise in the graph SI.

Mnih and Salakhutdinov  \cite{mnih2008probabilistic} introduced probabilistic matrix factorisation (PMF), which is equivalent to $\ell_2$-regularised (alternating least squares) MF. Probabilistic interpretations for MF with graph SI are kernelized PMF (KPMF  \cite{zhou2012kernelized}) and kernelized Bayesian MF (KBMF \cite{gonen2013kernelized}): placing priors over the columns of the latent feature matrices. This type of prior models the pairwise relation between rows, where these rows correspond to rows or columns of the incomplete data matrix. KPMF and KBMF showed good results on moderate-sized data but failed to scale to large data. 

To address scalability, graph-regularised least squares (GRALS \cite{rao2015collaborative}) was  proposed, with conjugate gradient descent exploiting the sparsity in the data matrix and the graphs, resulting in linear computational complexity and fast convergence. Recently there has been progress on applying deep learning to matrix completion, with and without side information, with good accuracy and showing potential for scalability \cite{berg2017graph,hartford2018deep,monti2017geometric,yao2018convolutional}.

All of the non-Bayesian or scalable methods incorporating graph SI \cite{cai2011graph,ma2011recommender,monti2017geometric,rao2015collaborative,zhou2012kernelized} fix the edges in the graph, considering them as true.  However, these graphs are known to be uncertain \cite{adar2007managing,asthana2004predicting}, and furthermore, the similarities they represent (e.g. homophily \cite{mcpherson2001birds}) are rarely specific to the matrix factorization task leaving no guarantee that correlations correspond \cite{ma2011recommender,singla2008yes}; graphs are often formed for other purposes, and hence their usefulness for MF is uncertain. This leaves room for improving the quality of the graph, leading to a significant reduction in sample complexity \cite{ahn2018binary}. In this work we will introduce a solution based on contested edges, defined later in the paper.

\paragraph{Example of Graph Side-Information and Contested Edges}
To better understand how graph similarities are not task-specific (are non-specific) to MF, take a common example of a movie-recommendation problem with social network (SN) SI (\citet{ma2011recommender} and in our experiments on Douban data). Connected users in the SN do not connect based on their similar preference of movies, instead they connect on the basis of a broader social context.
Similarly, the demographic information in MovieLens\footnote{https://grouplens.org/datasets/movielens/}, used to form a user-similarity graph, is only very indirectly related to the movie preferences \cite{mcpherson2001birds}. Nevertheless, more general similarity has been shown to often work well in practice, but some parts of it may turn out to be detrimental as we illustrate below.
 
\Cref{fig:CombinedIllustration} (top) 
shows a small movie-recommendation data matrix with SN SI (bottom-left).  Without SI, if row/column observations in the data matrix are similar, latent features will be similar. This can be inaccurate, e.g. users 2 and 3 would be considered similar based on the observations, and thus predictions for user 2 would be similar to ratings of user 3, whereas actually user 2 is similar to user 1. Graph information can help by encouraging latent features of connected users, like user 1 and user 2 here, to be similar, even when there is no observed data in the matrix to indicate they should be. However, for other users such as 4 and 5 the graph may mismatch with the data, indicating similarity whereas 4 and 5 are actually negatively correlated (as seen in their ratings of movies 5 and 6), and using the graph would thus worsen their predictions. We propose using this discrepancy to \textit{contest} the graph edge between users 4 and 5; removing this edge as in \Cref{fig:CombinedIllustration} (bottom-right) would improve predictions for users 4 and 5 to be consistent with their observed negative correlation, while the beneficial edge between users 1 and 2 will still remain. In real cases, mismatch between the data matrix and the SI would be detected based on much more data than in this illustration.


\begin{figure}[t]
\begin{tabular}{ccc}
  \begin{minipage}[c]{0.6\textwidth}
  \vspace{0pt}
  \begin{tabular}{llllllll}
    \toprule
    \multicolumn{1}{c}{} & \multicolumn{3}{l}{Movie}                   \\
    \cmidrule(r){2-8}
     User     & $m_1$ & $m_2$ & $ m_3$ & $m_4$ & $m_5$ & $ m_6$ & $ m_7$ \\
    \midrule
     $u_1$ &  5 &   &  1   &&&&\\
     $u_2$     & \textcolor{lightgray}{5} &  4 & \textcolor{lightgray}{1} &&&&\\
     $u_3$  &  1 & 4 & 5  &&&& \\
     $u_4$ &&&&  5 & 4 & 2 &  \textcolor{lightgray}{1}   \\
    $u_5$ &&&& \textcolor{lightgray}{1} &  2 & 4 & 5 \\
    \bottomrule
  \end{tabular}
  \end{minipage}
  & 
  \begin{minipage}{0.35\textwidth}
  \flushleft
  {\vspace{0.01cm}\includegraphics[width=0.7\textwidth]{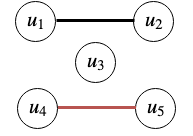}}
  \flushright
   {\vspace{0.01cm}\includegraphics[width=0.7\textwidth]{User1User2EdgeGraphSmoothBenefitPruned}}
   \end{minipage}
\end{tabular}
\caption{An illustrative movie recommendation problem.
\emph{Left:} data matrix where entries are user-ratings for movies: observations in black, unseen entries are blank and unseen entries to be predicted are in grey.
\emph{Middle:} Social Network SI; connected users assumed to have similar ratings. The edge shown in red is contested due to negative correlation of $u_4$ and $u_5$ in the data matrix. \emph{Right:} a graph update with removal of the contested edge to improve prediction accuracy.
}
\label{fig:CombinedIllustration}
\end{figure}


We do not propose to identify contested edges directly from the observed data but from correlations between the latent features. We introduce a probabilistic generative model that we call graph-based prior PMF (GPMF). Using the expectation-maximization (EM, \cite{bishop2006PRML}) algorithm we find a maximum a posteriori (MAP) estimate for the latent features and a maximum likelihood estimate (MLE) for the correlations of the latent features. 
We show in \Cref{sec:Mstep} how using GLASSO approximation we can remove contested edges by simply thresholding a constrained sample covariance matrix (SCM).

There exist a number of approaches to reduce the edges in a labelled graph, graph summarization, \citet{liu2018graph} for example. Most of these approaches do not use node attributes (labels) and to the best of our knowledge none use latent features for edge pruning. There are link prediction models that are probabilistic and use node attributes \cite{haghani2017systemic} but none of them can (yet) scale to large data \cite{li2014lrbm,nguyen2012latent,zhao2017leveraging}.

This paper introduces GPMF: the generative model in \Cref{sec:GenarativeModel}, the scalable constrained EM algorithm in \Cref{sec:TheEMAlgo}, experiments in \Cref{sec:experiments} and a conclusion in \Cref{sec:conclusion}.

\section{GPMF Generative Model and Relations to the Graph Side-Information} \label{sec:GenarativeModel}

We are provided with a partially observed data matrix $\*R$ with $N$ rows and $M$ columns. $\*R$ is approximated as the product of two low-rank matrices, $\*U$ and $\*V$. The number of latent features $D$ is fixed; $\*U$ and $\*V$ have $D$ columns, each row is a latent feature vector for each row / column of $\*R$ respectively. We use an index set $\*\Omega$ where $\*\Omega_{ij}$ is one if the element in row $i$ and column $j$ of $\*R$ is observed, and zero otherwise. The goal is to learn latent-feature matrices $\*U$ and $\*V$ that most accurately represent the full matrix $\*R$.

$\ell_2$-regularized MF has a scalable probabilistic interpretation: PMF. Each observed entry $\*R_{ij} : (i,j) \in \{ \*\Omega = 1 \}$ is assumed to have Gaussian noise $\sigma^2$; each row of $\*U$ and $\*V$ has a zero-mean spherical Gaussian prior.  Similar to KPMF \cite{zhou2012kernelized}, our model replaces the spherical Gaussian prior with a full-covariance Gaussian over the columns of the latent features (introducing row-wise dependencies):
\begin{align*}
p(\*R \mid \*U, \*V, \sigma^2) &= \prod_{i=1}^{N} \prod_{j=1}^{M} \mathcal{N} ( \*R_{ij} \mid \*U_{i:} \*V_{j:}^\top, \sigma^2 )^{\*\Omega_{ij}} \numberthis \label{eq:PmfLikelhood}\\
p(\*U \mid \*\Lambda_U) &= \prod_{d=1}^{D} \mathcal{N}(\*U_{:d} \mid \*0, \*\Lambda_U^{-1}) \numberthis \label{eq:GraphBasedPriorU}\\
p(\*V \mid \*\Lambda_V) &= \prod_{d=1}^{D} \mathcal{N}(\*V_{:d} \mid \*0, \*\Lambda_V^{-1}) \numberthis \label{eq:GraphBasedPriorV} \; .
\end{align*}
Graph SI constrains the structure of the precision matrices ($\*\Lambda_U$ or $\*\Lambda_V$) of \eqref{eq:GraphBasedPriorU} and \eqref{eq:GraphBasedPriorV},  discussed next. 
\subsection{Gaussian Markov Random Field (GMRF) relation to Precision matrix} An undirected graph $\mathcal{G}_Z = (\mathcal{V}_Z, \mathcal{E}_Z)$ with a set of nodes $\mathcal{V}_Z$, representing a set of random variables $\{Z_i\}_{i=1}^P$, and a set of edges $\mathcal{E}_Z \subseteq \{(i,j) \mid i,j \in \mathcal{V}_Z\}$, defines the conditional independence of the random variables, where the absence of an edge $(i,j) \notin \mathcal{E_Z}$ implies that the two random variables are conditionally independent $[\*\Lambda_Z]_{ij} = 0$ given the remaining random variables \cite{bishop2006PRML,hastie2009elements,lauritzen1996graphical,rue2005gaussian}: $Z_i \perp Z_j \mid \{Z_k : k \in (1,...,N) \setminus (i,j)\}$. In the remainder of the paper we refer to the adjacency matrix of $\mathcal{G}_Z$ : a symmetric matrix where $[\*A_Z]_{ij}$ is one if an edge exists between nodes $i$ and $j$ and zero otherwise. We can summarize the GMRF relation as $[\*A_Z]_{ij} = 0 \iff [\*\Lambda_Z]_{ij} = 0 \mid i \neq j$.
\subsection{Laplacian Matrix relation to Precision Matrix}  \label{sec:RegLaplaceMat} The Laplacian matrix of a graph is $\*L_Z = \*D - \*A_Z$, where $\*D_{i,i} = \sum_{j=1}^N [\*A_Z]_{ij}$ is a diagonal degree matrix, and is positive-semi-definite by definition.  The regularised Laplacian $\*L_Z^+ = \*L_Z + \gamma \*I \ , \ \gamma > 0$ is a positive-definite matrix; a valid precision matrix retaining the GMRF property \cite{dong2016learning,egilmez2016graph,egilmez2017graph,hastie2009elements,liu2014bayesian}: $\*[\*L_Z^+]_{ij} = 0 \iff [\*\Lambda_Z]_{ij} = 0 \mid i \neq j$.
\begin{lem} \label{lem:PosteriorEquivGRALS}
If the precision matrix in \eqref{eq:GraphBasedPriorU} and  \eqref{eq:GraphBasedPriorV} is the regularised Laplacian matrix $\*L^+_U,\*L^+_V$, then the MAP estimator of our model has the same objective function as GRALS \cite{rao2015collaborative}. Our GPMF model therefore gives a generalization of the GRALS objective function.
\end{lem}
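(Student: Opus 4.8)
The plan is to identify the MAP objective of GPMF with the negative log-posterior, and then to show term by term that substituting the regularised Laplacian for the precision matrices recovers the GRALS objective. First I would write the MAP estimate of $(\*U,\*V)$ as the minimiser of $-\log p(\*R\mid\*U,\*V,\sigma^2) - \log p(\*U\mid\*\Lambda_U) - \log p(\*V\mid\*\Lambda_V)$. Since $\*\Lambda_U$ and $\*\Lambda_V$ are fixed while optimising over $\*U$ and $\*V$, the Gaussian normalising constants of \eqref{eq:GraphBasedPriorU} and \eqref{eq:GraphBasedPriorV} (in particular the log-determinant terms $\tfrac12\log\det\*\Lambda$) are constant in $\*U,\*V$ and can be discarded.

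From the likelihood \eqref{eq:PmfLikelhood} the data-dependent term is the masked squared-error loss $\tfrac{1}{2\sigma^2}\sum_{(i,j):\*\Omega_{ij}=1}(\*R_{ij}-\*U_{i:}\*V_{j:}^\top)^2$. From the priors, each of the $D$ columns contributes a quadratic form $\tfrac12\*U_{:d}^\top\*\Lambda_U\*U_{:d}$; summing over $d$ and writing the result as a trace gives the penalties $\tfrac12\mathrm{tr}(\*U^\top\*\Lambda_U\*U)$ and $\tfrac12\mathrm{tr}(\*V^\top\*\Lambda_V\*V)$.

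Next I would substitute $\*\Lambda_U=\*L^+_U=\*L_U+\gamma_U\*I$ (and analogously for $\*V$) and use linearity of the trace to split each penalty as $\tfrac12\mathrm{tr}(\*U^\top\*L_U\*U)+\tfrac{\gamma_U}{2}\|\*U\|_F^2$. Applying the standard Laplacian quadratic-form identity $\mathrm{tr}(\*U^\top\*L_U\*U)=\tfrac12\sum_{i,j}[\*A_U]_{ij}\|\*U_{i:}-\*U_{j:}\|^2$ exhibits the graph-smoothness penalty that couples the latent features of connected nodes, while $\|\*U\|_F^2$ is the familiar $\ell_2$/PMF ridge term. Collecting everything reproduces the GRALS objective, with the regularisation trade-off parameters identified in terms of $\sigma^2$ and $\gamma_U,\gamma_V$.

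The computation itself is routine; the two points that need care, rather than any genuine obstacle, are (i) checking that the prior's log-normaliser really is constant in $\*U,\*V$ so that dropping it is legitimate (it is, because $\*\Lambda$ is held fixed during the MAP estimate), and (ii) tracking the multiplicative constants $\sigma^2$ and $\gamma$ so that the correspondence with the published GRALS objective is exact and not merely up to an overall scaling.
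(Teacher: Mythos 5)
Your proposal is correct and takes essentially the same route as the paper: both identify the MAP estimator with the minimiser of the negative log posterior, discard the $\*\Lambda$-dependent normalising constants, and match the masked squared-error term from the likelihood with the quadratic penalties $\tfrac12\mathrm{tr}(\*U^\top\*\Lambda_U\*U)$ arising from the column-wise Gaussian priors. The only cosmetic differences are that the paper invokes biconvexity to argue block-wise (holding $\*V$ fixed) and stops at the compact form $\mathrm{tr}(\*U^\top\*L^+_U\*U)$, whereas you treat $(\*U,\*V)$ jointly and additionally expand $\*L^+_U=\*L_U+\gamma_U\*I$ into the graph-smoothness and ridge terms, which makes the correspondence with the published GRALS objective more explicit.
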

\begin{proof}[Proof of Lemma \ref{lem:PosteriorEquivGRALS}.]
Our generative model is biconvex, and hence it suffices to prove for 
$\*U$ that the posterior is equivalent to the GRALS objective.
Holding $\*V$ fixed and finding the log posterior of $\*U$:
\begin{align*}
&\ln p(\*U | \*R, \sigma^2, \*V, \*\Lambda_U) 
\propto  \ln p(\*R \mid \*U, \*V, \sigma^2) p(\*U \mid \*\Lambda_U)\\
&\propto - \frac{1}{\sigma^2} \sum_{i=1}^{N} \sum_{j=1}^{M} \mathcal{P}_\Omega \left( \*R_{ij} - \*U_{i:} \*V_{j:} \right)^2 - \frac{1}{2} \sum_{d=1}^{D} \*U_{:d}^\top \*\Lambda_U \*U_{:d} \\
&= - \frac{1}{2} \lVert \mathcal{P}_\Omega (\*R - \*U \*V^\top) \rVert_{\text{F}}^2 - \frac{\sigma^2}{2} \text{tr}(\*U^\top \*L^+_U \*U )  ) \ \numberthis \label{eq:MAP_GRALS_Equiv},
\end{align*}
where $\*U_{i:}$ is row $i$ of matrix $\*U$ and $\*U_{:d}$ is column $d$ and noting that $\sum_{i,j} \*U^2_{ij} = \text{tr}(\*U^\top \*U) = \lVert \*U \rVert_{\text{F}}^2$. \Cref{eq:MAP_GRALS_Equiv} is the GRALS objective function \cite{rao2015collaborative}. Derivations in the supplementary material.
\end{proof}

\section{GRAEM: Scalable EM for GPMF} \label{sec:TheEMAlgo}

We naturally extend each least-squares sub-problem of GRALS \cite{rao2015collaborative} with graph-regularised alternating EM (GRAEM), having the same global convergence guarantees as GRALS \cite{xu2013block}.  We work through optimising $\*U$ with $\*V$ fixed, solving for $\*V$ has the same form.

\subsection{The EM Formulation}
We have an incomplete data matrix $\*R$, fixed matrix $\*V$, latent variable matrix $\*U,$ and graph SI. From the graph we derive $\*L_U^+$ (see \Cref{sec:RegLaplaceMat}), then set the precision matrix $\*\Lambda_U=\*L_U^+$, which we consider our model parameters. We want to maximize the expectation of the joint density of the data and the latent variables, with $\*U$ as our unknowns and $\*\Lambda_U$ as our input parameters:
\begin{align*}
\mathcal{Q}(\*\Lambda_U, \*\Lambda_U^{\text{old}}) &= \int_{\*U} p(\*U | \*R, \*\Lambda_U^{\text{old}}) \ln p(\*R,\*U \mid \*\Lambda_U) \; \textrm{d} \*U \\ 
&= \mathbb{E}_{p(\*U | \*R, \*\Lambda_U^{\text{old}})} \left[ \ln p(\*R,\*U \mid \*\Lambda_U ) \right]  \ . \numberthis
\label{eq:EMQFunc}
\end{align*}
\subsection{E-step: Expected Value of the Latent Variables} \label{sec:EStepFullU} The expected value of our latent variables has a Gaussian posterior distribution (see supplementary material), we can therefore use the MAP, which is equivalent to the GRALS objective function as shown in \Cref{lem:PosteriorEquivGRALS}: $ \mathbb{E}_{p(\*U | \*R, \*\Lambda_U^{\text{old}})}[ \ \*U \ ] = \*\mu_{U}^\text{post.} \approx \hat{\*\mu}^{MAP}_U  \label{eq:PoseMeanMAPApprox}$.

\subsection{M-step: Removing Contested Edges}\label{sec:Mstep}

We can remove edges in the graph that correspond to negative correlations between the latent features by simply removing negative covariances from an SCM; this relationship holds for large scale and sparse problems; details follow.
\subsubsection{The MLE of the parameters and GLASSO} To find the MLE we maximise the $\mathcal{Q}$ function in Equation \eqref{eq:EMQFunc} with respect to $\*\Lambda_U$. The maximum can be found in closed form by taking the derivative with respect to the parameter $\*\Lambda_U$ and setting to zero:
\begin{align*}
\underset{\*\Lambda_U}{\arg\max} \quad \mathcal{Q}(\*\Lambda_U, \*\Lambda_U^{\text{old}}) &= \left(\mathbb{E}_{p(\*U | \*R, \*\Lambda_U^{\text{old}})} \left[ \frac{1}{D} \sum_{d=1}^{D}  \*U_{:d} \*U_{:d}^\top \right] \right)^{-1} \\
&= \left(\mathbb{E} \left[ \*S_U^D \right] \right)^{-1} =  \*\Lambda_U^*  \numberthis  \ . \label{eq:MaxOfQForPrecU}
\end{align*}
\Cref{eq:MaxOfQForPrecU} is the inverse of an SCM, where each sample is one of the columns of $\*U$. Values for $\*U$ are unknown, so we use the MAP given the previous estimate of the parameters ($\*\Lambda_U^{\text{old}})$.  The solution (if any) is almost surely not sparse. Graphical lasso (GLASSO \cite{mazumder2012graphical}) finds a sparse solution for the MLE of the precision matrix, where samples are assumed to be normally distributed, in line with our model assumptions in \Cref{sec:GenarativeModel}. We therefore propose solving \eqref{eq:MaxOfQForPrecU} with GLASSO. 

\subsubsection{Constrained GLASSO and Highly Efficient Approximation} GLASSO finds the MLE of the precision matrix under an $\ell_1$ penalty, given an SCM $\*S$. \citet{grechkin2015pathway} showed that the problem space can be reduced with prior knowledge on which pairwise relationships do not exist, forcing them to be zero in the solution:
\begin{align*}
    \underset{\*\Lambda_U \succeq 0 }{\min}& \quad tr(\*S \*\Lambda_U  ) - \log | \*\Lambda_U | + \tau \left\lVert \*\Lambda_U \right\rVert_1 , \\  
    \text{subject to } & \quad \left[\*\Lambda_U\right]_{ij} = 0, \left[\*A^0_U\right]_{ij} = 0 \ . \numberthis \label{eq:ConstrainedGLASSO}
\end{align*}
\citet{zhang2018largescaleprec} uses a relation between the sparsity structure of the $\tau$-thresholded SCM and the GLASSO solution;  for large-scale problems, when the solution is very sparse, the connected components are equivalent \cite{mazumder2012graphical}, given further assumptions the complete sparsity structure is equivalent \cite{fattahi2019graphical,sojoudi2016equivalence,sojoudi2016graphical}. 
However, this solution will locate correlations, positive and negative, with a strong magnitude, greater than $\tau$. Next we detail how to identify edges that correspond to only negative correlations.

\subsubsection{Removing a Contested Edge}
The sparsity structure of the SCM and the (GLASSO) solution are equivalent under mild assumptions 
that are found to be true for sufficiently large $\tau$, that result in $\approx 10 N$ non-zeros in the solution \cite{fattahi2017graphical,fattahi2019graphical}. One of these assumptions is sign-consistency where each non-zero element of the solution has the opposite sign in the SCM. Assuming sign-consistency we can identify all graph edges that correspond to negative correlations in the latent features, with $\mathbb{E} [ \*S_U^D ]$ from \Cref{eq:MaxOfQForPrecU} as our SCM:

\begin{equation*}
    [\*A_U^{\text{new}}]_{ij} = \begin{cases} 
    1, & \left[\*A_U^0\right]_{ij} = 1 \ , \ \mathbb{E} \left[ \*S_U^D \right]_{ij} \geq \tau \\
    0, & \left[\*A_U^0\right]_{ij} = 1 \ , \ \mathbb{E} \left[ \*S_U^D \right]_{ij} < \tau \ , \hfill \textbf{ CE}\\
    0, & \text{otherwise} , \hfill \textbf{con-E} , 
    \end{cases} \numberthis \label{eq:posCovThesh}
\end{equation*}

where $\*A_U^{\text{new}}$ is the updated adjacency matrix, the threshold parameter $\tau$ is set to zero (or can be increased for a sparser solution) and $\*A_U^0$ is the adjacency matrix of the graph SI; CE is a contested edge and con-E is a constrained edge. To solve \Cref{eq:posCovThesh} we need to compute $\mathbb{E} [ \*S_U^D ]$, we can decompose the problem:
\begin{align*}
 \mathbb{E}[\*S^{D}_U] &= \frac{1}{D} \sum_{d=1}^D \mathbb{E}\left[\*U_{:d} \*U_{:d}^\top \right] \\
 \mathbb{E}\left[\*U_{:d} \*U_{:d}^\top \right] &=   \text{Cov}[\*U_{:d}] + \mathbb{E}[\*U_{:d}] \mathbb{E}[\*U_{:d}^\top] \\
&= \*\Sigma_{U_{:d}}^\text{post.} + \left[\*\mu_{U_{:d}}^\text{post.}\right]\left[\*\mu_{U_{:d}}^\text{post.}\right]^\top \ .
\end{align*}

The remaining task is to efficiently approximate the posterior covariance $\*\Sigma_{U_{:d}}^\text{post.}$ for each column, $d$, of $\*U$, which we discuss next.

\subsubsection{Posterior Covariance Approximation} \label{sec:SparsePrecEstMStep:ApproxPostCov}

The posterior of our GPMF model, in \Cref{sec:GenarativeModel}, is a joint Gaussian distribution, where the likelihood in \Cref{eq:PmfLikelhood} introduces relations between the columns of the latent features and the prior in  \Cref{eq:GraphBasedPriorU} introduces relations between the rows. This results in a posterior covariance matrix with an inverse Kronecker sum structure \cite{kalaitzis2013bigraphical,schacke2004kronecker}: $\*\Sigma_{U}^\text{post.} = (I_D \otimes \*\Lambda_U + \alpha \; \*C)^{-1}$ where $\otimes$ is the Kronecker product operator and 

\begin{align*}
\*C &= \left[ \*c(d,d') \right]_{d,d' = 1}^D \ , \\ \*c(d,d') &= \text{diag}\left( \left\{ \sum_{j=1}^{M}\*\Omega_{ij} \*V_{jd} \*V_{jd'} \right\}_{i=1}^N \right) \ .
\end{align*}


\paragraph{Column-wise independence assumption.} We simplify the Kronecker sum with a column-wise independence assumption, setting all off-diagonals of $\*C$ to zero:
\begin{align*}
\*\Lambda^\text{post.}_U  &\approx I_D \otimes \*\Lambda_U + \alpha \; \text{diag} \left( \*C \right)  \\ &= \text{blkdiag}\left( \left\{ \hat{\*\Lambda}_{U_{:d}}^{\text{post.}} \right\}_{d=1}^D \right)  \numberthis \label{eq:PostCovApprox} \;,
\\ 
\hat{\*\Lambda}_{U_{:d}}^{\text{post.}} &= \*\Lambda_U +\alpha \;  \text{diag} \left( \*C_d \right)
, \\
 \text{diag} \left( \*C_d \right) &=  \text{diag}\left( \left\{ \sum_{j=1}^{M}\*\Omega_{i,j} \*V_{j,d}^2 \right\}_{i=1}^N \right)  \ ,
\end{align*} 
where $\alpha = [\sigma^2]^{-1}$ is the inverse of the observation noise in \eqref{eq:PmfLikelhood}, diag takes a vector to create a diagonal matrix and blkdiag takes a sequence of matrices to construct a block-diagonal matrix.

\paragraph{Sparse Cholesky factorisation:} 

Each $\hat{\*\Lambda}_{U_{:d}}^{\text{post.}}$ is still too large to invert. Assuming the high-dimensional matrix is sparse, as in \citet{zhang2018largescaleprec}, its Cholesky factorisation is computable in $\mathcal{O}(N)$ time \cite{davis2004column}. We compute $K$ samples as an unbiased estimate for the approximate posterior covariance:
\begin{align*}
\hat{\*\Sigma}_{U_{:d}}^{\text{post.}} &= \left[ \hat{\*\Lambda}_{U_{:d}}^{\text{post.}}\right]^{-1} \approx \frac{1}{K} \sum_{k=1}^{K} \*x_k \*x_k^\top 
\\
\*x_k &\sim \mathcal{N}\left(\*0, \left[\hat{\*\Lambda}_{U_{:d}}^{\text{post.}}\right]^{-1} \right) \ .
\end{align*}

\begin{figure*}[t!]
\centering
\subfigure[Decreasing contested edges]{
  \includegraphics[width=0.45\columnwidth]{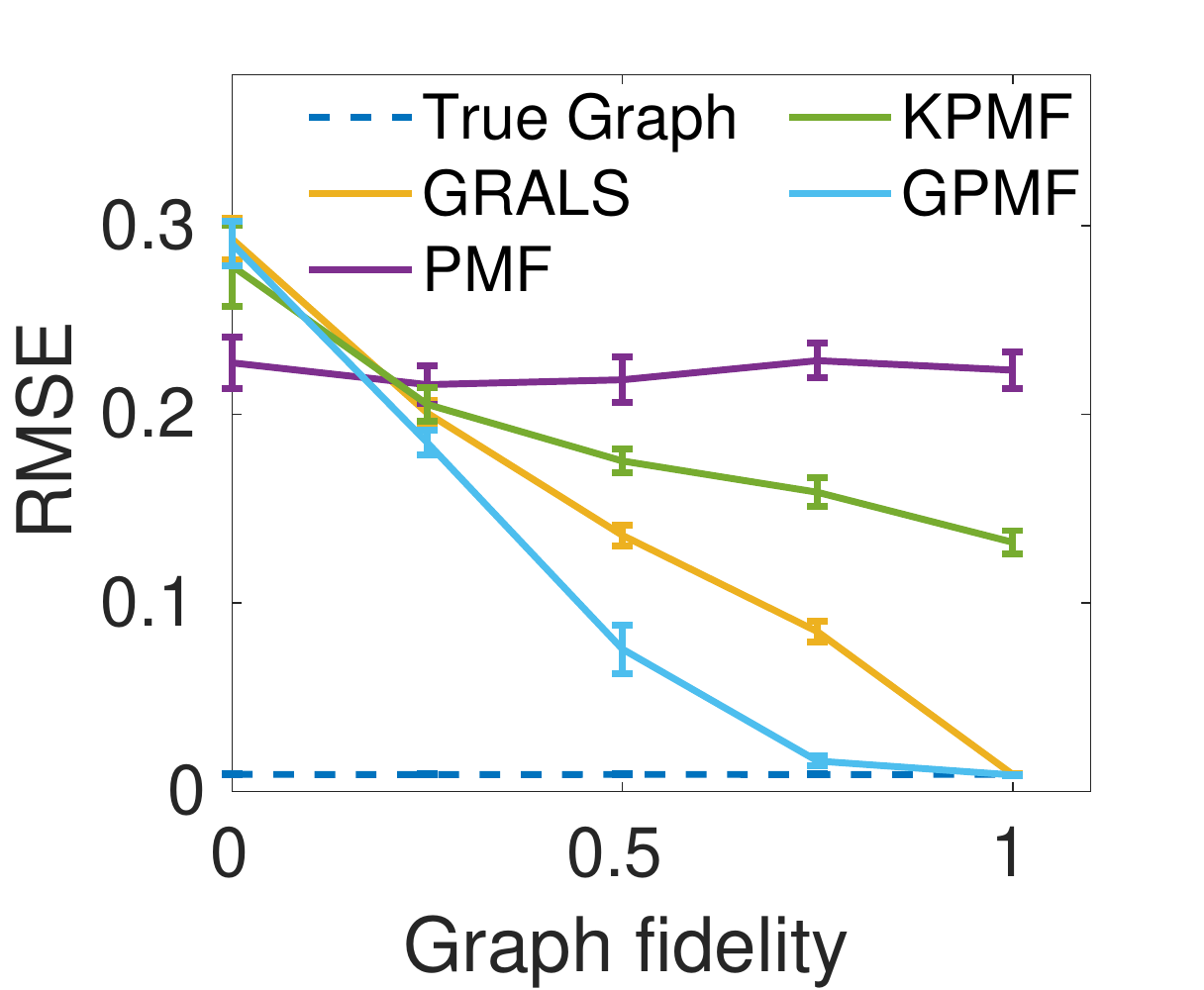}
}
\subfigure[Increasing observation noise]{
    \includegraphics[width=0.45\columnwidth]{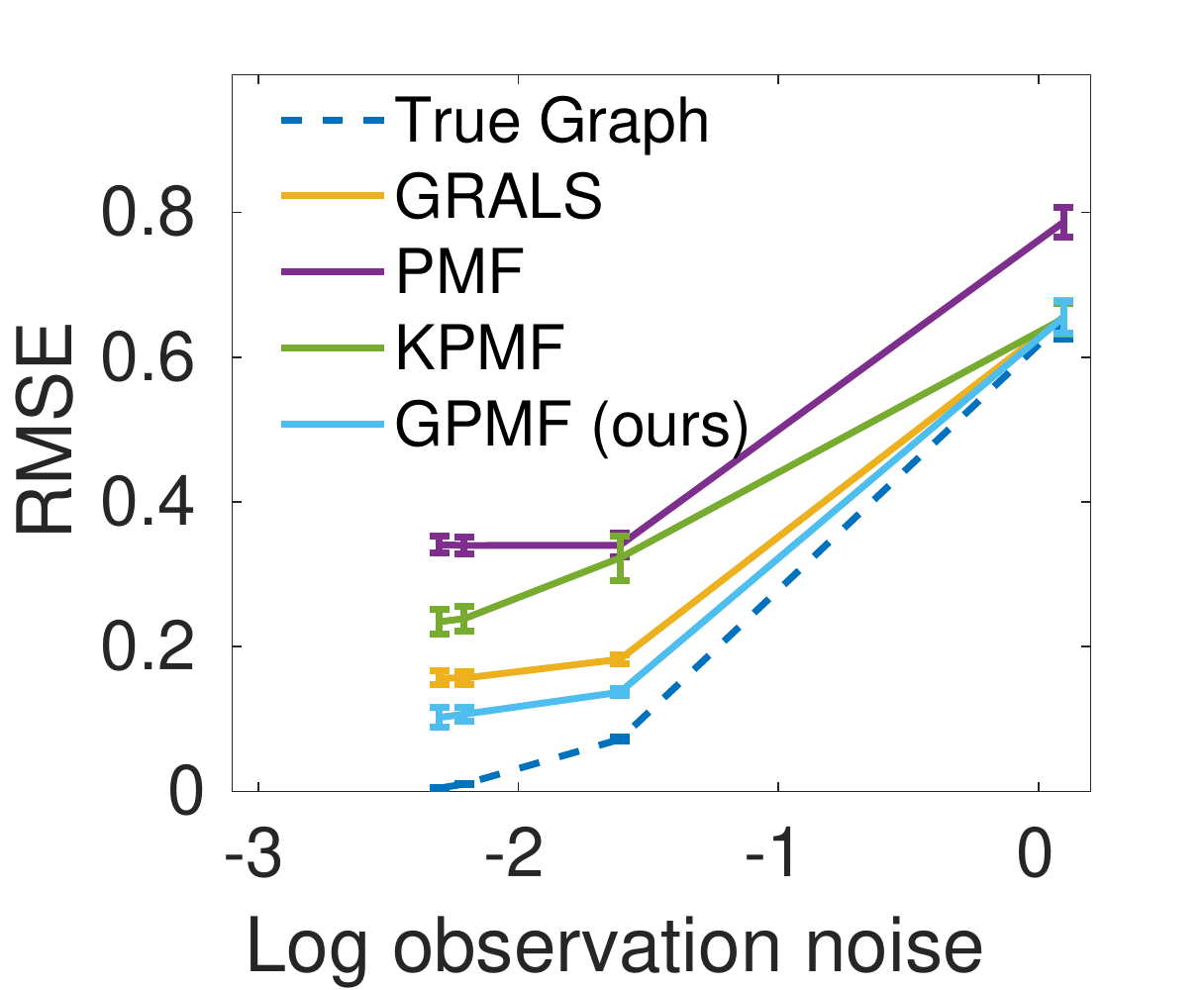}
}

\subfigure[Increasing observed entries]{
    \includegraphics[width=0.45\columnwidth]{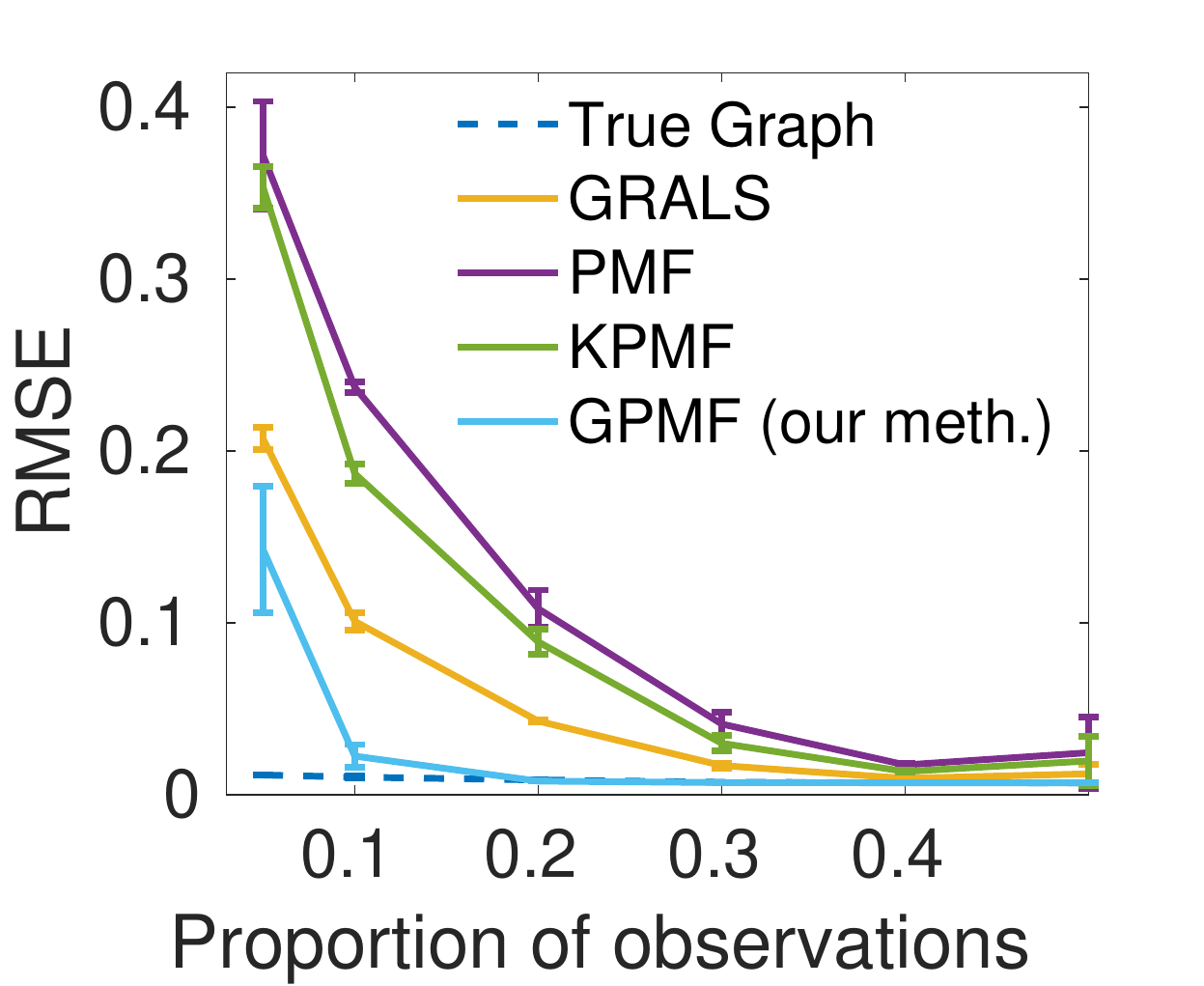}
}
\subfigure[Increasing dimensionality]{
    \includegraphics[width=0.45\columnwidth]{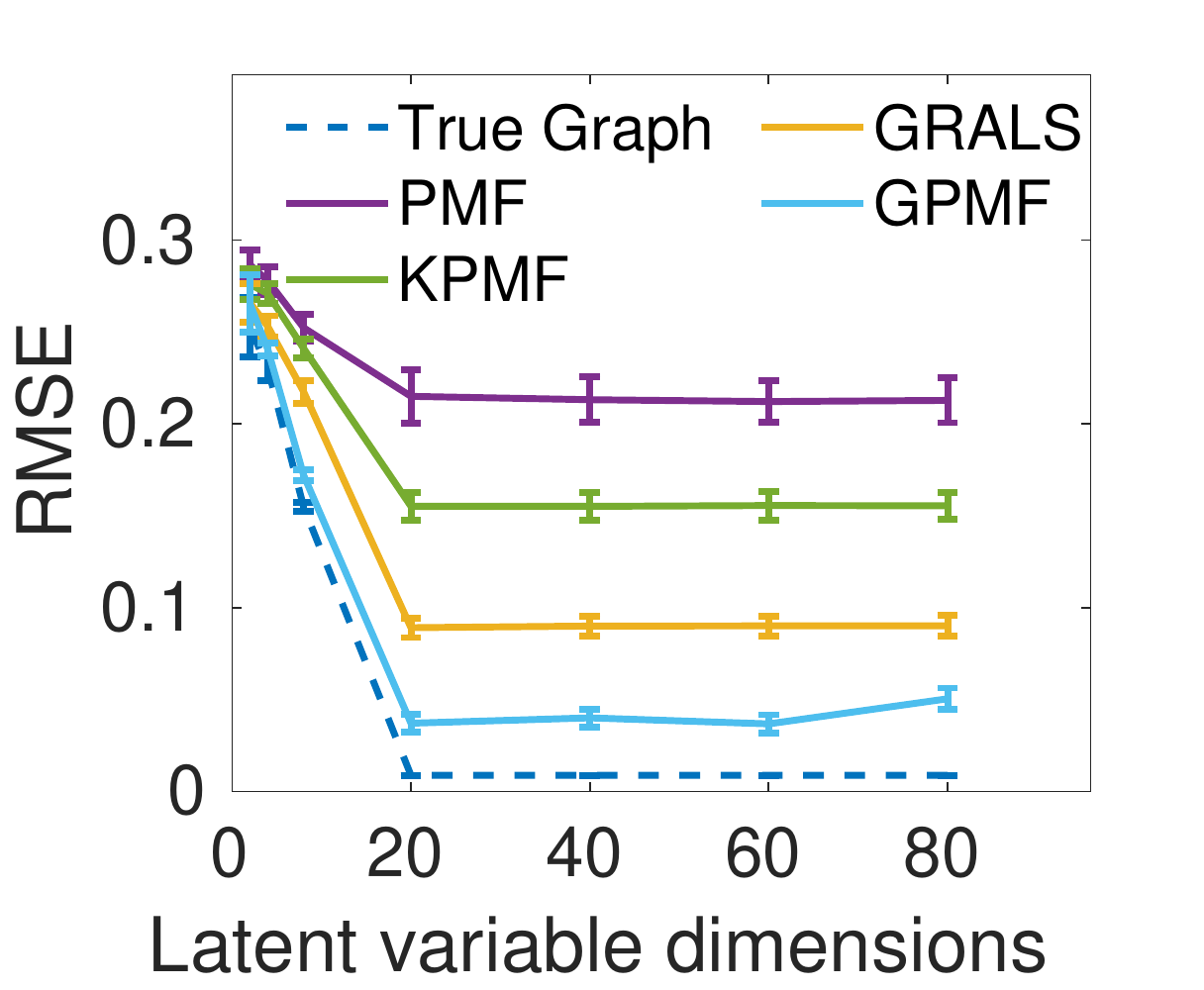}
}
\caption{Synthetic data experiments} \label{fig:SynthDataExps}
\end{figure*}

\subsection{The Algorithm}
The EM algorithm iterates between E-step and M-step until convergence.  We initialize the latent feature matrices ($\*U,\*V$) by finding the MAP with no graph SI using PMF, to learn latent features that reflect the observed entries of the data matrix. In practise any method to learn the latent features with no SI can be used. The M step uses the relations between the latent features to identify negative correlations and remove them from the graph SI. The E-step then finds the MAP of the latent features given the updated graph. In theory the E and M step could be continued until some convergence criterion was met, but this would be less efficient and we get good results with just one step. So the three steps of our algorithm are lines 1,3 and 4:
\begin{algorithm}
\caption{Graph-regularised alternating EM (GRAEM)}
\hspace*{\algorithmicindent} \textbf{Input: $\*A^{0}_U, \*A^{0}_V$} \\
 \hspace*{\algorithmicindent} \textbf{Output}: $\hat{\*U},\hat{\*V},\*A_U^+,\*A_V^+$
 \begin{algorithmic}[1]
\STATE $\*U^0,\*V^0 \gets \text{Initialise with PMF (GRALS with no graphs)}$
\WHILE{not converged}
\STATE $\*A^{t}_U, \*A^{t}_V$  $\gets$ \text{Run M-step \Cref{eq:posCovThesh} with } $\*U^{t-1},\*V^{t-1}$ and $\*A^{0}_U, \*A^{0}_V$ as structural constraints
\STATE $\*U^t, \*V^t$ $\gets$ \text{Run E-step with regularized Laplacians} \text{given } $\*A^{t}_U, \*A^{t}_V$
\ENDWHILE
\end{algorithmic} \label{algo:GPMF}
\end{algorithm}

\subsection{Scalability: Computational Complexity}

The algorithm has three steps: lines 1,3,4 in \Cref{algo:GPMF}. Line 1 is linear in the number of non-zeros $nz()$ in the data matrix $\mathcal{O}(nz(\*\Omega))$ per conjugate gradient (CG) iteration. Line 3 comprises sparse Cholesky factorisation, linear in time with respect to the dimension size $\mathcal{O}(N + M)$, constrained SCM computation and thresholding, $\mathcal{O}(nz(\*A_U) + nz(\*A_V))$ both converge in one time step. Line 4 uses GRALS with the sparsified graphs: $\mathcal{O}(nz(\*\Omega) + nz(\*A^+_U) + nz(\*A^+_V))$ per CG iteration. Line 4 is initialised with $\*U,\*V$ values from the PMF run, largely reducing the number of iterations required. Our algorithm remains linear with respect to the number of non-zeros. The additional M-step is a trivial additional cost, and if $\*A^+_U,\*A_V^+$ are much sparser, reducing iteration costs in Line 3, the overall computational load can be less than GRALS using the original graphs.


\section{Experiments} \label{sec:experiments}
We compare our algorithm to a baseline with no graph SI (PMF, \cite{mnih2008probabilistic}), the current most scalable method, GRALS \cite{rao2015collaborative}, and for accuracy less scalable methods KPMF \cite{zhou2012kernelized} and sRMGCNN \cite{monti2017geometric}. For sRMGCNN we used their published code, ran it on a (NVIDIA Tesla P100) GPU and used cross validation to find a good T value; note that this model took several orders of magnitude more time than the other methods: on Flixster data GPMF and GRALS converged in 20 seconds, PMF in 0.2 seconds, sRMGCNN took 30 minutes. We also ran KBMF \cite{gonen2013kernelized} but with an extremely long computational time on even the smallest dataset, and a large number of parameters, we failed to achieve reasonable results.

\subsection{Experiments on Synthetic Data} \label{sec:synthetic_experiments}
To analyze the behaviour of our algorithm we generate a data matrix with a known underlying graph.
Therefore we can replace true edges in the graph with \textit{corrupted edges} (CEs) that contest the true underlying structure, controlling the accuracy of the graph SI. We use a block-diagonal regularised-Laplacian precision matrix. 
We generate a $400\times 400$ data matrix by Equations \eqref{eq:PmfLikelhood}-\eqref{eq:GraphBasedPriorV}, with proportion of corrupted edges 0.3, observation noise 0.01, 7\% observed values, and 40 latent dimensions;
we vary these settings in the experiments below.
See supplementary material for further 
details. 

\textbf{Graph Fidelity.} 
In \Cref{fig:SynthDataExps} (a) we vary the number of CEs. A graph with no CEs has fidelity one ($F=1$), with all CEs $F=0$. GPMF consistently improves prediction accuracy over methods with graph SI for $F > 0$, and performance is equal for $F=0$. PMF with no graph performs better below $F=0.3$, showing that a graph of low quality can make prediction accuracy worse.

\begin{figure*}[t!]
\begin{adjustbox}{max width=1.5\textwidth,center}
\centering
\subfigure[Epinions (45\% of edges)]{\includegraphics[width=0.49\textwidth]{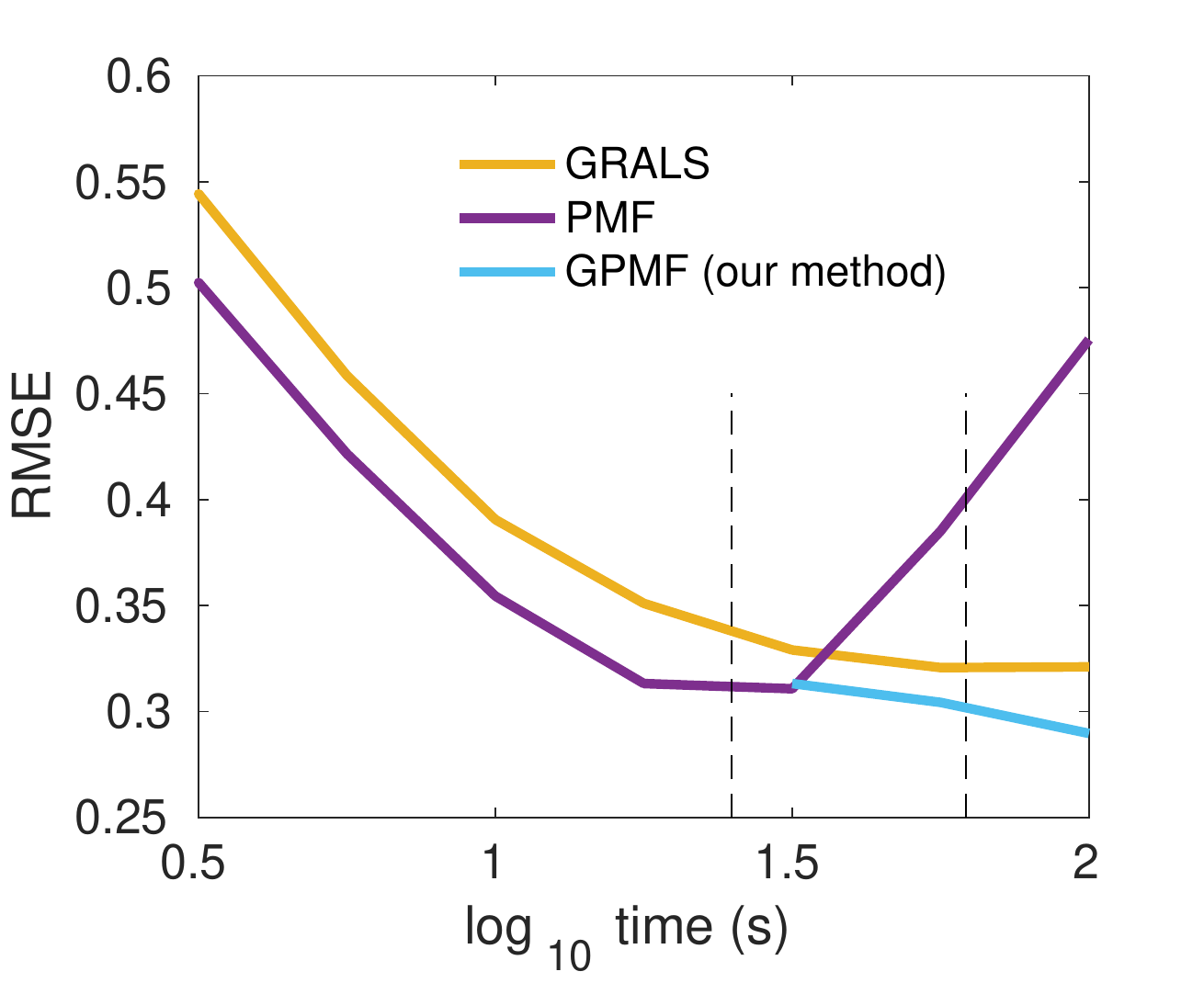}}
\subfigure[Yahoo Music (80\% edges)]{\includegraphics[width=0.49\textwidth]{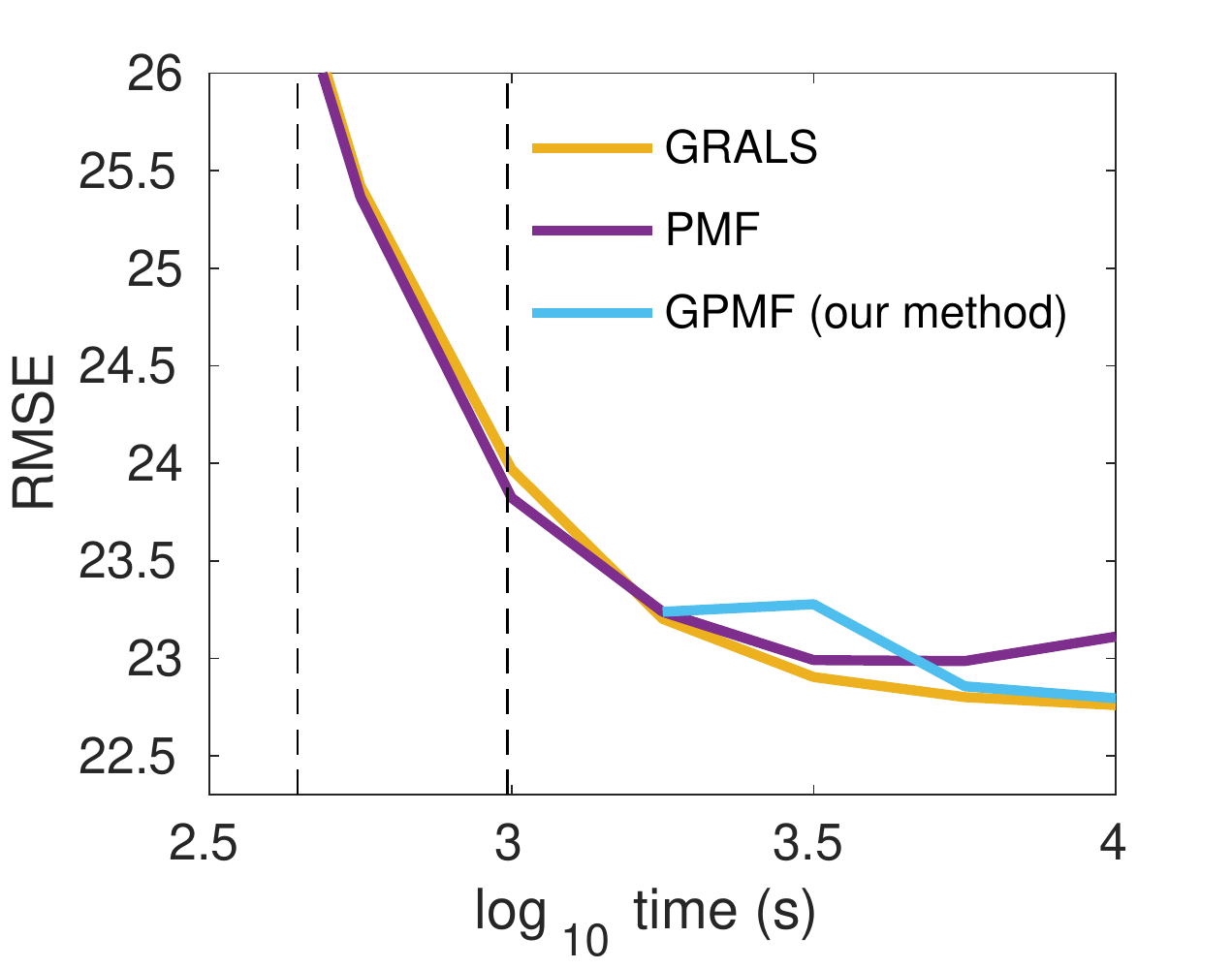}}
\subfigure[MovieLens20M (65\% edges)]{\includegraphics[width=0.49\textwidth]{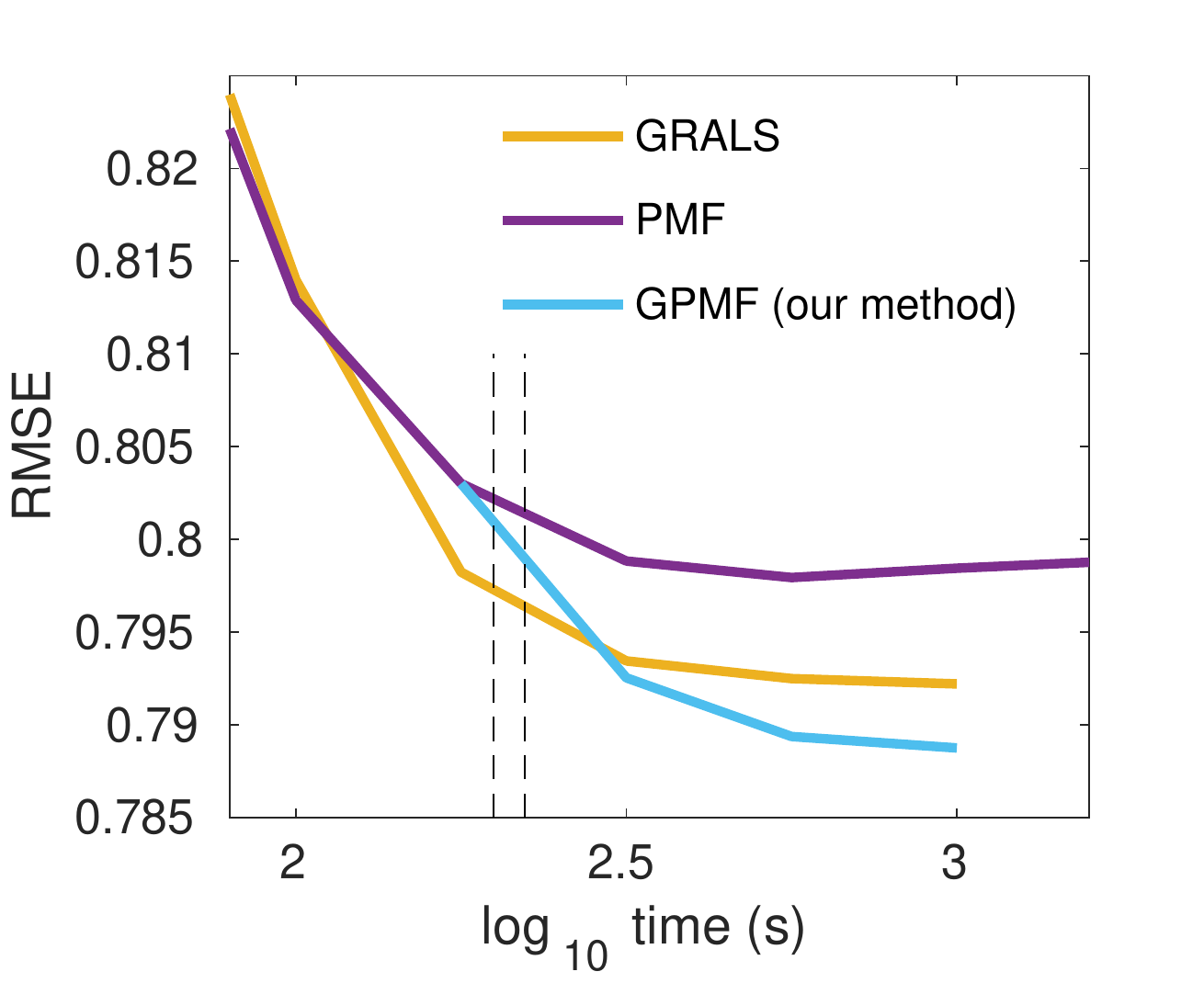}}
\end{adjustbox}
\caption{Convergence time; vertical lines show start and end of M-step. c) 40NN graph.}
\label{fig:LargeDataExperiments}

\end{figure*}

\textbf{Observation Noise.} 
\Cref{fig:SynthDataExps} (b)  shows the benefit of GPMF diminishes as noise increases; learning negative correlations requires learning from the observations. However, at worst GPMF is only as bad as using the original corrupted graph.

\textbf{Proportion of Observations.} 
In \Cref{fig:SynthDataExps} (c)  with just 10\% of observed entries our algorithm can almost attain the same prediction accuracy as using the true graph. GRALS requires 30\% to achieve a similar accuracy. At 40\% of observed entries the graph is no longer beneficial. Note that most large scale matrix completion problems have fewer than 10\% observed entries.

\textbf{Model Capacity.} \Cref{fig:SynthDataExps} (d)  shows that with too few latent features all models are negatively effected, but overall GPMF attains the best prediction accuracy.


\textbf{GLASSO accuracy} We analyse the accuracy of removing CEs over several simulations. With 7\% of observed entries, 31.7\% of CEs are correctly removed and 19\% of true edges (TEs) are wrongly removed; increasing observed entries to 40\%, 44.3\% of CEs are removed and 0.3\% of TEs. Fixing observed entries at 20\%, with noise $\sigma^2 = 0.01$, 39\% of CEs and 2.7\% of TEs are removed, and with $\sigma^2 = 1$, 34.3\% CEs and 42.7\% TEs are removed. We see clearly that observation noise strongly effects the ability to identify contested edges, as shown in \Cref{fig:SynthDataExps} (b).  Accuracy improves with more observed entries, but even with low levels of noise and a reasonable amount of observations successful removal of CEs is only moderate. Regardless of this moderate accuracy, experiments show this is enough to attain significant improvements in prediciton accuracy.


\subsection{Experiments on Real Data}
\label{sec:real_data_experiments}

In \Cref{tab:RMSEsComputationalComplexity} GPMF using GRAEM (our method) gives improved accuracy over GRALS on all small datasets: 3000 (3k) by 3k subsets of Flixster and Douban  \cite{monti2017geometric} (full datasets not attainable) and MovieLens100k \cite{harper2015movielens}); the bottom rows of the table show the size and number of observations for each data matrix and the number of edges in each side-information graph.   In \Cref{fig:LargeDataExperiments} our method is shown to add no computational cost on large data: MovieLens 20 million
\cite{harper2015movielens}, Epinions \cite{tang2012mtrust} and Yahoo Music \cite{rao2015collaborative,dror2011yahoo}), note that proportion of edges used by GPMF is reported in figure title. \Cref{fig:LargeDataExperiments} (a) is an example of poor quality graph side-information, we see this as PMF outperforms GRALS with the side-information; our method (GPMF using GRAEM) estimates over half the edges as contested, removing them seem to improve the quality. We believe that there were no gains in \Cref{fig:LargeDataExperiments} (b) as the graph is extremely sparse and removing some edges has little effect. We test this hypothesis with MovieLens 20M in \Cref{tab:RMSEsComputationalComplexity} by increasing the number of nearest neighbours from 10 to 40, we see that GRALS with the original graph decreases in performance while our algorithm continues to improve, we plot the best results in \Cref{fig:LargeDataExperiments} (c); the computational time to estimate contested edges (between the two vertical broken lines) is a fraction of the running time of the algorithm.

We also tested general usefulness of the updated graph: We get a small improvement for Douban with KPMF using with 77 \% of edges, we also get the same accuracy for Flixster with almost helf the edges.


\begin{table*}[h]

\caption{Result summary on real datasets (RMSE), $\*A^+$ is the graph updated with GRAEM (our method). Bold = best result.}
\label{tab:RMSEsComputationalComplexity}
\begin{footnotesize}
\begin{sc}
\hskip-3.2cm\begin{tabular}{lrrrrrr}
\toprule
&  Flixster &Douban&MovieLens& Epinions & Yahoo & MovieLens 20M \\
Algo.  &  (3k) &  (3k) & 100k &  &  Music &  (10-/20-/40-NN)\\
\midrule
PMF & 0.9809 & 0.7492  & 0.9728 & 0.31& 22.991 & 0.7980 / 0.7980 / 0.7980\\
GRALS & 0.9152 & 0.7504  & 0.9178 & 0.32& \textbf{22.760} & 0.7898 / 0.7925 / 0.7922\\
GPMF / GRAEM (\scriptsize{ours}) & \textbf{0.8857} & 0.7497 & \textbf{0.9174} & \textbf{0.28} & 22.795 & 0.7894 / 0.7895 / \textbf{0.7887}\\
KPMF & 0.9212 & 0.7324 & 0.9336 & - & - & - \\
KPMF ($\*A^+$)  & 0.9212 & \textbf{0.7323} & 0.9374 & - & - & - \\
sRMGCNN & 0.9108 & 0.7915  & 0.9263  & - & - & - \\
\midrule
Data dims. & \scriptsize{3k x 3k} & \scriptsize{3k x 3k}  & \scriptsize{1k x 1.5k}  & \scriptsize{22k x 296k} & \scriptsize{250k x 300k} & \scriptsize{138k x 27k} \\
num. of obs. &  2.6k & 137k &  100k & 824k & 6M & 20M \\
\midrule
 edges ($\*A_U$/$\*A_V$) & 59k / 51k & 2.7k / 0 & 12.6k / 29k  & 574k / 0 & 0 / 3M
 &  0 / 493k - 0 / 963k - 0 / 1.9M \\  
 \midrule
 prop. ($\*A_U$/$\*A_V$) & 0.57 / 0.63 & 0.77 / 0 & 0.63 / 0.61 & 0.45 / 0 & 0 / 0.8
 & 0 / 0.88 - 0 / 0.71 - 0 / 0.65 \\  
\bottomrule
\end{tabular}
\end{sc}
\end{footnotesize}
\end{table*}

\section{Conclusion} \label{sec:conclusion}
We present a highly efficient method to improve the quality of graph side-information for matrix factorisation. Of the three steps in the algorithm, the initialisation of the latent features and the estimation of the latent features with the updated graph (the E-step) can be performed with any method for matrix completion without SI and with graph SI respectively. With such a small computational cost a graph update (the M-step) to improve quality seems like a valuable step when including graph side-information into matrix factorisation.  Furthermore, we demonstrated the added robustness using our algorithm on real graph side-information. By increasing the number of nearest neighbours for generating graphs from feature side-information our algorithm, GRAEM, improved while GRALS worsened. Our graph update step allows for more noisy graphs to improve the matrix completion accuracy.

Future work on improving the graph update could further improve this method; we showed with simulated data the GLASSO approximation is only moderately successful. 

\section{Acknowledgements}
The research was partly funded by the Academy of Finland grant 313748 and Business Finland grant 211548, computational resources provided by the Aalto Science-IT project.

\newpage


\bibliography{main}
\bibliographystyle{plainnat}

\newpage


\maketitle

\section{Appendix}
\appendix

\section{Posterior of GPMF model}
We derive the posterior of $\*U$, fixing $\*V$, given the data $\{ \*R, \*\Omega \}$ and parameters $\*\Lambda_U$ for the Graph-based prior probabilitic matrix factoriation (GPMF) model. The posterior for $\*V$ follows the same steps with $\*U$ fixed. We start by breaking down the likelihood and prior into scalar operations:
\begin{align}
\log & \; p(\*R \mid \*U, \*V, \alpha, \*\Lambda_U)  \\
 &\propto - \sum_{i=1}^{N} \sum_{j=1}^{M} {\*\Omega_{ij}} \left[\frac{\alpha}{2} \left(\*R_{ij} - \*U_{i:} \*V_{j:}^\top \right)^2\right] - \sum_{d=1}^{D} \frac{1}{2} \*U_{:d}^\top \Lambda_U \*U_{:d} 
 \\
 &= - \frac{\alpha}{2} \sum_{i=1}^{N} \sum_{j=1}^{M} \*\Omega_{ij} \left[ \left(\*R_{ij}^2 -2\*R_{ij}\sum_{d=1}^{D} \*U_{id} \*V_{jd} + \sum_{d=1}^{D}\sum_{d'=1}^{D} \*V_{jd}\*U_{id}\*U_{id'}\*V_{jd'} \right)\right] \\ 
  & \quad \quad - \frac{1}{2} \sum_{d=1}^{D} \sum_{i=1}^{N} \sum_{i'=1}^{N}  \*U_{id} \left[\*\Lambda_U\right]_{ii'} \*U_{i'd} 
  \label{eq:ExpandPosteriorToComponents} \\
&= - \frac{\alpha}{2} \sum_{i=1}^{N} \sum_{j=1}^{M} \*\Omega_{ij}\*R_{ij}^2 - \frac{1}{2} \sum_{i=1}^{N}\left( \alpha \sum_{j=1}^{M} \*\Omega_{ij} \sum_{d=1}^{D}\sum_{d'=1}^{D} \left[ \*V_{jd}\*U_{id}\*U_{id'}\*V_{jd'} \right. \right. \\
 & \left. \left. \quad \quad \quad \quad  -2 \*U_{id} \*V_{jd} \*R_{ij} \right] + \sum_{d=1}^{D} \sum_{i'=1}^{N} \*U_{id} \left[ \*\Lambda_U \right]_{ii'} \*U_{i'd}  \right)  \ . 
 \\ 
\end{align}
Using the scalar expansion we recombine to form the full posterior in scalar form in \Cref{eq:FullPostScalar}, with respect to the vectorization of $\*U$ \Cref{eq:FullMatrixPosteriorWRTvecUColStack} and w.r.t. the vectorization of $\*U^\top$ \Cref{eq:FullMatrixPosteriorWRTvecURowStack}:
\begin{align}
\log & \; p(\*U \mid \*R,  \alpha, \*V, \*\Lambda_U)  \\
& \propto  -\frac{1}{2} \sum_{i=1}^{N} \sum_{j=1}^{M} \sum_{d=1}^{D} \left( \alpha  \*\Omega_{ij}  \left[ \sum_{d'=1}^{D} \*V_{jd}\*U_{id}\*U_{id'}\*V_{jd'}  -2 \*U_{id}\*V_{jd}\*R_{ij} \right]  +  \sum_{i'=1}^{N} \*U_{id} \left[\*\Lambda_U\right]_{ii'} \*U_{i'd} \right) \\
&= -\frac{1}{2}  \sum_{i=1}^{N} \sum_{d=1}^{D} \*U_{id}  \left( \alpha \sum_{j=1}^{M} \*\Omega_{ij} \left[ \sum_{d'=1}^{D} \*V_{jd}U_{id'}\*V_{jd'}  -2 \*V_{jd}\*R_{ij} \right]  +   \sum_{i'=1}^{N} \left[\*\Lambda_U\right]_{ii'}\*U_{i'd}  \right) 
\\
&= -\frac{1}{2}  \sum_{i=1}^{N} \sum_{i'=1}^{N} \sum_{d=1}^{D} \sum_{d'=1}^{D} \left[\*U_{id}  \left( \alpha \sum_{j=1}^{M} [i=i']I_{ij} \*V_{jd}\*V_{jd'}   +   [d=d']\left[\*\Lambda_U\right]_{ii'}  \right)\*U_{i'd'} \right. \label{eq:FullPostScalar} \\
& \left. \quad \quad \quad \quad -2 \alpha \*U_{id}\*V_{jd}\*R_{ij}\right]  
\label{eq:FullMatrixPosteriorWRTvecUScalarSums}\\
&= - \frac{1}{2} \text{vec}(\*U)^\top \left( I_D \otimes \*\Lambda_U + \alpha  \*C  \right) \text{vec}(\*U) - 2 \alpha \text{ Tr}(\*U^\top \*R\*V) \label{eq:FullMatrixPosteriorWRTvecUColStack} \\
&= - \frac{1}{2} \text{vec}(\*U^\top)^\top \left( \*\Lambda_U \otimes I_D  + \alpha \text{ blkdiag} \left( \left\{ \*B_i \right\}_{i = 1}^N \right) \right) \text{vec}(\*U^\top) -2 \alpha \text{ Tr}(\*U \*V^\top \*R^\top) \ , \label{eq:FullMatrixPosteriorWRTvecURowStack}
\end{align}
where $[i=j]$ is Iverson bracket notation where the value is one if the proposition is satisfied and zero otherwise, 
$\otimes$ is the Kronecker product, 
$\text{vec}(\*X)$ stacks the columns of matrix $\*X$ to produce a vector, $\text{Tr}(\*X)$ is the trace of matrix X and finally $I_N$ is an $N \times N$ identity matrix and:
\begin{align}
\*C &= \begin{bmatrix}
\*c(1,1) & \*c(1,2) & \cdots & \*c(1,D)\\
\*c(2,1) & \*c(2,2) & \cdots & \*c(2,D)\\
\vdots & \vdots & \ddots & \vdots\\
\*c(D,1) & \*c(D,2) & \cdots & \*c(D,D)\\
\end{bmatrix} \label{eq:PosteriorCovCMatrix}\\
\*c(d, d') &= \text{diag}\left( \left\{ \sum_{j=1}^{M}\*\Omega_{ij} \*V_{jd} \*V_{jd'} \right\}_{i=1}^N \right) \\
\*B_i &= \sum_{\{j : (i,j) \in \{ \*\Omega = 1 \} \}} \*V_{j:}^\top \*V_{j:} 
\end{align}
Notice that in the posterior when stacking the columns $\text{vec}(\*U)$ in \Cref{eq:FullMatrixPosteriorWRTvecUColStack} the prior precision matrix is a block diagonal matrix and the evidence matrix is a partitioned matrix with each block being diagonal, when stacking the rows  $\text{vec}(\*U^\top)$ the structural pattern is the other way around: the prior is a partitioned matrix of diagonal blocks and the evidence matrix is a block diagonal matrix. It is worth noting that \Cref{eq:FullMatrixPosteriorWRTvecUColStack} and \Cref{eq:FullMatrixPosteriorWRTvecURowStack} both have the structure of a Kronecker sum, $\*A \oplus \*D = \*A \otimes I + I \otimes \*D$.
We look more closely at \Cref{eq:FullMatrixPosteriorWRTvecURowStack}, showing the relation with the scalar summations and the final notation in more detail. Firstly the linear term:
\begin{align}
-2 \alpha \sum_{i=1}^{N} \sum_{d=1}^{D} \*U_{id}\sum_{j=1}^{M}\*\Omega_{ij}\*V_{jd}\*R_{ij} &= -2 \alpha \sum_{i=1}^{N} \sum_{d=1}^{D} \*U_{id} \sum_{j=1}^{M}\*V_{jd}\*R_{ij} 
\\
&= -2 \alpha \sum_{d=1}^{D} \*U_{:d}^\top \*R \*V_{:d} 
\\
&= -2 \alpha \text{ vec}(\*U)^\top \text{vec}(\*R\*V) 
\\
&= -2 \alpha \text{ Tr}(\*U^\top \*R\*V) \\
& = -2 \alpha \sum_{i=1}^{N} \*U_{i:} \*V^\top [\*R_{i:}]^\top  
\\ 
& = -2 \alpha \text{ vec}(\*U^\top)^\top \text{vec}(\*V^\top \*R^\top) \\
& = -2 \alpha \text{ Tr}(\*U \*V^\top \*R^\top) \quad 
\ ,
\end{align}
and the quadratic term:
\begin{align}
\sum_{i=1}^{N} \sum_{i'=1}^{N} \sum_{d=1}^{D} \sum_{d'=1}^{D} \*U_{id} \sum_{j=1}^{M} [i=i'] \left[ \*V_{jd}\*V_{jd'} \right] \*U_{i'd'} &= \text{vec}(\*U)^\top \left[ \sum_{j = 1}^{M} \*V_{j:}^\top \*V_{j:} \otimes I_N \right] \text{vec}(\*U)  
\\
& = \text{vec}(\*U)^\top \left[ \text{ diag}\left\{\sum_{j=1}^M \*V_{j:}^\top \*V_{j:}\right\}_N \right] \text{vec}(\*U)\\
\sum_{i=1}^{N} \sum_{i'=1}^{N} \sum_{d=1}^{D} \sum_{d'=1}^{D} \*U_{id}  [d=d']\left[\*\Lambda_U\right]_{ii'} \*U_{i'd'} &= \text{vec}(\*U)^\top \left[ \*\Lambda_U \otimes I_D \right] \text{vec}(\*U) \\
& = \text{vec}(\*U^\top)^\top \left[ I_D \otimes \*\Lambda_U \right] \text{vec}(\*U^\top) \ .
\end{align}
Having organized the posterior, with respect to $\*U$, into a quadratic and a linear term we can complete the square to find the mean $\*\mu_U^{(n)}$ and precision matrix $\*\Lambda_U^{(n)}$ of the conditional posterior distribution for the matrix $\*U$:
\begin{align}
\*\Lambda_U^{(n)} &= \left[ \*\Lambda_U \otimes I_D \right] + \alpha \text{ blkdiag} \left( \left\{ \*B_i \right\}_{i = 1}^N \right) \\
\*\mu_U^{(n)} &= \left[\*\Lambda_U^{(n)}\right]^{-1} \text{ vec}(\*V^\top R^\top)
\end{align}

or the mean and covariance can be represented as different formulations with scalar sums \eqref{eq:FullMatrixPosteriorWRTvecUScalarSums}, or vectorization of the matrix without transposing \eqref{eq:FullMatrixPosteriorWRTvecUColStack}.

\section{Experiments: further details} \label{appendix:sec:experiments}

We compare our GPMF method to GRALS\footnote{GRALS code: https://github.com/rofuyu/exp-grmf-nips15} \cite{rao2015collaborative}, PMF (GRALS with no graph side-information) \cite{mnih2008probabilistic}, KPMF\footnote{KPMF code: https://people.eecs.berkeley.edu/~tinghuiz/} \cite{zhou2012kernelized} and sRMGCNN\footnote{Recurrent Multi-Graph Neural Networks code: https://github.com/fmonti/mgc} \cite{monti2017geometric}
. KPMF uses the regularised Laplacian graph kernel. We also tested KBMF\footnote{Software: https://github.com/mehmetgonen/kbmf} \cite{gonen2013kernelized}, but with many tuning parameters and a slow learning speed, making parameter tuninig costly and complex, making a similar effort as made for tuning the other algorithms we were not able to attain good results. GRALS, PMF, GPMF (GRAEM, our method), KPMF and KBMF experiments were run on a regular laptop computer: Hewlett Packard EliteBook 840 G3 notebook with Intel Core i5 and 16 GiB memory. sRMGCNN was run on a GPU (NVIDIA Tesla P100, running on a 2x12 core Xeon Dell PowerEdge C4130). For model learning and evaluation we use a the same training and validation set for all models.  We use the same systematic search procedure (similar effort for tuning) for each model for a fair comparison. 

\subsection{Synthetic Data}
Default settings for the experiments, if no other details are mentioned, are a $400 \times 400$ data matrix with 7 percent observed values, a graph fidelity of 0.7, observation noise $\sigma^2= 0.01$, 40 latent feature dimensions and noise between similar latent features 0.0001. We use the graphs to create the latent feature matrices $\*U$ and $\*V$  according to the GPMF model. Sampling of observed entries for training and validation is according to a non-uniform distribution; to avoid rows and columns having similar numbers of observations, we use a multinomial distribution with Dirichlet prior). Each experiment setting is run five times for each model and an average is reported with the standard deviation as the height of the error bar.

\subsection{Real data experiments}

\paragraph{Flixster (3k).} A three thousand dimensional subset matrix from the original Flixster dataset\footnote{\label{ftn:geoDLRef}https://github.com/fmonti/mgcnn} as in \cite{monti2017geometric}. Where graph side information is constructed from the scores of the original matrix: $\*G_U$ with 59354 edges and $\*G_V$ 50918 edges.

\paragraph{Douban (3k).} A three thousand dimensional subset matrix from the orignal Douban dataset\footnote{See footnote 1} as in \cite{monti2017geometric}. Where user graph side-information is a social network with 2688 edges.

\paragraph{MovieLens 100k and 20M.} The GroupLens official MovieLens\footnote{https://grouplens.org/datasets/movielens/} 100k and 20M datasets \cite{harper2015movielens}. For MovieLens 100k graph side-information is constructed for users, based on user demographic information using k-nearest neighbour (kNN) algorithm with ten neighbours. For MoveLens 20M graph side-information is constructed using kNN with k=\{10,20,40\} based on movie genre data with 492956, 962644 and 1870508 edges respectively.

\paragraph{Epinions.} We take the Epinions\footnote{www.epinions.com} dataset as described in KPMF, but we use a much larger data size\footnote{https://www.cse.msu.edu/\textasciitilde tangjili/trust.html , https://www.cse.msu.edu/\textasciitilde tangjili/datasetcode/epinions.zip} (22164 x 296277) with user trust network data (22164 x 22164) \cite{tang2012mtrust}. The dataset is extremely sparse (9.8312e-05 proportion of observed entries), and distributed un-uniformly, making this a difficult problem. 

\paragraph{YahooMusic} The official Yahoo Music ratings data from the KDD cup \cite{dror2011yahoo} as used in \cite{rao2015collaborative} to demonstrate scalability. We construct the graph with exact kNN on the music covariate data (artist,genre,album) with ten neighbours. This results in a very sparse graph, likely connecting many music tracks from the same artist and in the same album only.

\paragraph{Model tuning} Data is split into test and validation. We use a procedure of parameter searching that we repeat for each model. PMF observation noise is fixed at $\sigma^2=1$. PMF, GRALS and GPMF use the samge CG iterations configuration (CG). GPMF uses $\tau=0$ for thresholding. KPMF uses the regularised Laplacian graph kernel with graph strength $\gamma$ and learning rate $\epsilon$. KBMF is trained with uninformative priors: $(\alpha_{\lambda}=1,\beta_{\lambda}=1)$, changing these values we saw no improvents; with at most one graph for each kernel multi-kernel parameters do not require tuning.

\begin{table}[ht!]
\caption{Model parameter tuning for real world experiments}
\label{tab:TuningRealDataExperiments}
\begin{small}
\begin{sc}
\hskip-1.3cm\begin{tabular}{lrrrr}
\toprule
&  Flixster &Douban&MovieLens& Epinions  \\
  &  (3k) &  (3k) & 100k &  \\
\midrule
D=  &  10 & 10  & 10 & 10  \\
\midrule
PMF (\scriptsize{$\sigma_U,\sigma_V,CG$}) & 0.1,0.1,1 & 5,5,1 & 1.2,1.2,2 & 0.75,0.2,3 \\
GRALS (\scriptsize{$\lambda_L,\lambda_U,\lambda_V$}) & 3,0.5,0.5 & 8,2,5 & 0.1,0.01,0.01 & 0.01,0.01,0.02 \\
GPMF (\scriptsize{ $\sigma_2,\lambda_L,\lambda_U,\lambda_V$}) & 1,5,1,1 & 0.5,5,2,5  & 0.05,1,0.05,0.05 & 0.1,0.1,0.1,0.1 \\
KPMF ($\sigma^2,\epsilon,\gamma_U,\gamma_V$) & $0.1,10^{-6},1,1$  & $0.07,10^{-6},100,100$ & $0.2,10^{-5},1.1,1.1$  & N/A  \\
KBMF ($\gamma_U,\gamma_V,\sigma_g,\sigma_y$) &  1,1,0.1,1 & 1,1,0.2,1 & 0.35,0.3,0.1,0.15 & N/A \\
\midrule
& Yahoo \vline &  MovieLens 20M & & \\
 &  Music \vline &  10NN& 20NN & 40NN\\
\midrule
D=  & 20  & 10 & 10 & 10 \\
\midrule
PMF (\scriptsize{$\sigma_U,\sigma_V,CG$}) & 10,10 & 1.25,12.5,10 & 1.25,12.5,10 & 1.25,12.5,10 \\
GRALS (\scriptsize{$\lambda_L,\lambda_U,\lambda_V$}) & 100,200  & 5,0.5,0.01  & 1,1,10 & 1,1,10 \\
GPMF (\scriptsize{ $\sigma_2,\lambda_L,\lambda_U,\lambda_V$}) & 10,10,100,200 & 0.05,5,0.1,0.01 & 0.5,5,0.1,0.1 & 0.5,2.5,0.01,0.01 \\
\bottomrule
\end{tabular}
\end{sc}
\end{small}
\end{table}

\paragraph{General use of updated graph} We believe that removing the \textit{contested} edges improves the graph for this task in general for any model, not just for GPMF.  To this end we tested this by using the full graph vs. updated graph for KPMF \cite{zhou2012kernelized} . We plot an example of improved covergence speed and accuracy for KPMF in \Cref{fig:ComputationalComplexityCompareMovienLens100k}. 
\begin{figure}[ht!]
\centering
\includegraphics[width=0.75\columnwidth]{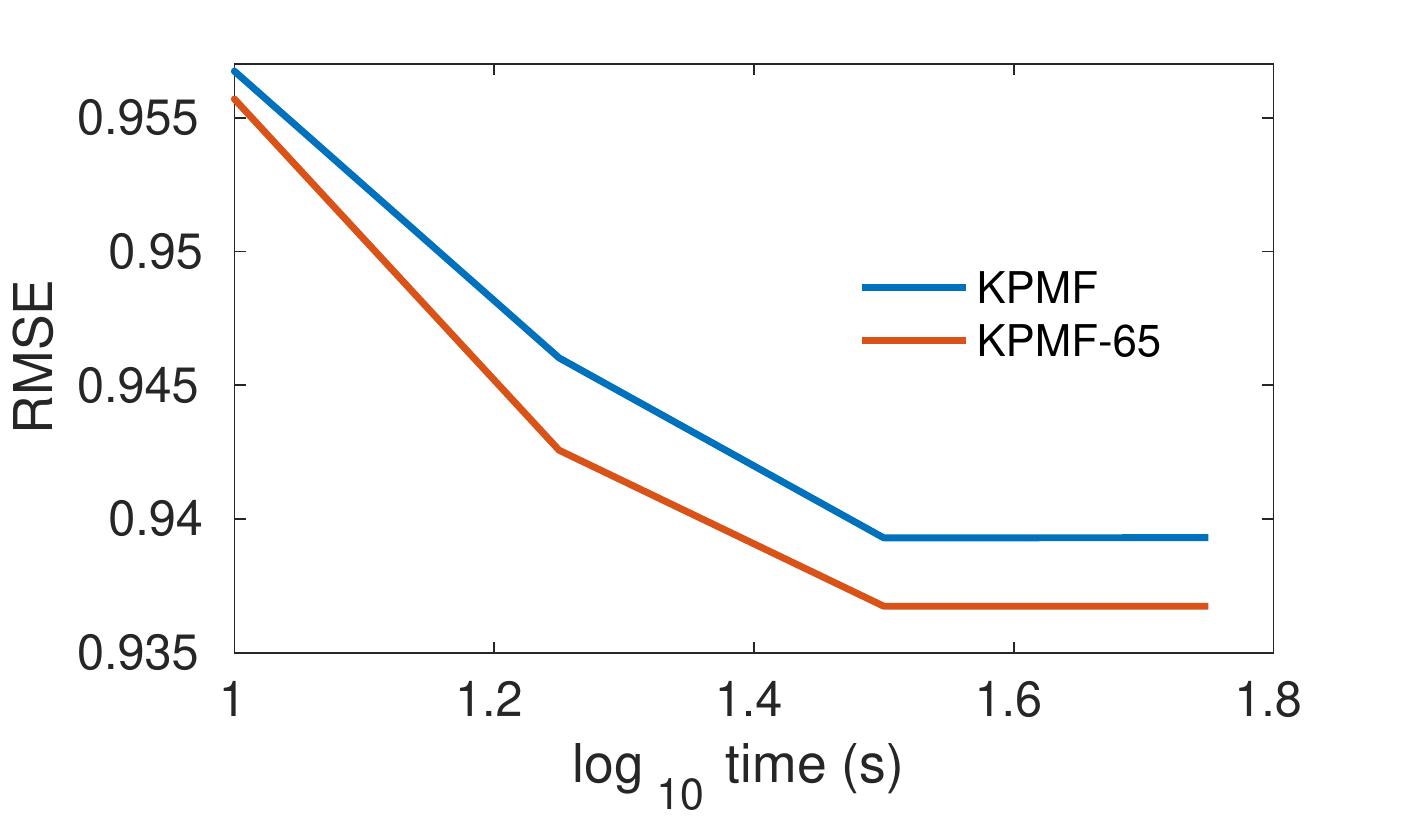}
\caption{Convergence time on MovieLens 100k. We provide an updated graph with 65\% of the edges learnt with GPMF to 
another graph-regularised matrix factorisation method (KPMF \cite{zhou2012kernelized}, to show that the optimised graph improves the convergence speed and precision of arbitraty algorithms for the graph regularised matrix completion problem.}
\label{fig:ComputationalComplexityCompareMovienLens100k}
\end{figure}




\end{document}